\documentclass[11pt]{article}

\usepackage[T1]{fontenc}
\usepackage[margin=1in]{geometry}
\usepackage{amsfonts,amsmath,amssymb,amsthm,blkarray,bm}
\usepackage[numbers,sort&compress]{natbib}
\usepackage[colorlinks=true, allcolors=blue]{hyperref}
\usepackage{booktabs}
\usepackage{graphicx}
\usepackage{algorithm}
\usepackage{algorithmic}
\usepackage{enumitem}

\newtheorem{assumption}{Assumption}

\newtheorem{lemma}{Lemma}

\newtheorem{theorem}{Theorem}

\def\R{\mathbb{R}}

\title{No Unique Minimizer, No Problem: \\
On the Consistency of Robust Neural Classifiers}

\renewcommand{\thefootnote}{\fnsymbol{footnote}}
\author{
Subhabrata Majumdar$^1$\footnotemark[1] \quad
Anand Deo$^1$\footnotemark[2] \quad
Partha Pratim Saha$^2$\footnotemark[2] \quad
Abhik Ghosh$^3$\\[3pt]
\shortstack{
\small{$^1$Indian Institute of Management Bangalore}\\
\small{$^2$Independent Researcher}\\
\small{$^3$Indian Statistical Institute, Kolkata}
}
}
\date{}

\begin{document}

\maketitle

\footnotetext[1]{Corresponding author, email: \texttt{smajumdar@iimb.ac.in}}
\footnotetext[2]{Equal contribution}
\renewcommand{\thefootnote}{\arabic{footnote}}

\begin{abstract}
Neural network classifiers trained by cross-entropy minimization are highly sensitive to label noise and adversarial contamination. While robust alternatives offer bounded influence and resistance to corruption, their statistical foundations in the deep learning setting are insufficient due to a fundamental difficulty: neural parameterizations are non-identifiable, so the population loss minimizer is an equivalence class of parameters, not a unique point. We develop a consistency theory for robust neural classifiers based on the S-divergence family that requires no identifiability assumption. Casting training as stochastic optimization over a non-identifiable parameter space, we prove that empirical S-divergence minimizers converge to the population-optimal equivalence class under mild regularity conditions, and verify these conditions for three architecture choices. We further establish that limit points of the robust training algorithm are stationary points of the empirical objective. Experiments on vision and language benchmark datasets confirm that S-divergence training maintains clean-data accuracy while exhibiting performance competitive with existing robust methods.
\end{abstract}

\section{Introduction}
\label{sec:introduction}

Modern neural networks achieve remarkable predictive performance across domains ranging from computer vision to natural language processing. However, their statistical foundations remain underexplored and challenging due to a fundamental mismatch between the classical theory of parametric estimation and the geometry of deep models. In traditional parametric statistics, consistency is usually established by proving that an estimator $\hat\theta_n \in \Theta$, obtained from a sample of size $n$, converges to a unique population parameter $\theta_0$. Such arguments rely on the identifiability assumption, in that different population parameter values must correspond to different probability distributions for the parameter estimate. Deep neural
networks violate this principle in an extreme manner. Permutation symmetries among hidden units, redundant neurons, scaling invariances, and overparameterization imply that many distinct parameter vectors generate exactly the same predictive function \citep{hechtnielsen1990algebraic,sussmann1992uniqueness}. This raises a fundamental statistical question:

\begin{quote}
\emph{Does the absence of a unique neural network parameter minimizer prevent statistical consistency?}
\end{quote}

In this work, we show that the answer is negative. The appropriate object of statistical inference is not the parameter vector itself, but the induced conditional probability model. We develop a theory of statistical consistency for a class of robust neural network classifiers $\{ g_\theta (\cdot): \theta \in \Theta \}$ that remains valid under parameter non-identifiability. Instead of requiring convergence of $\hat\theta_n$ to a single point, we prove convergence of empirical solutions to a population-level equivalence class
\[
\Theta_0=\{\theta: g_\theta(x)=p_0(x),\ x\in\mathcal{X}\},
\]
where all parameters in $\Theta_0$, plugged into the classifier $g_\theta$, represent the same Bayes-optimal conditional probability function $p_0$.

This is possible because neural network estimation is naturally a stochastic optimization problem over a possibly non-identifiable parameter space. Under mild regularity conditions ensuring uniform convergence of the empirical objective, every sequence of empirical maximizers approaches the population minimizer set, even when this set is not a singleton \citep{shapiro2021lectures}. Consequently, the learned classifier converges at the function level although the parameter sequence may oscillate indefinitely inside an equivalence class.

This perspective is particularly important for robust learning. Real-world datasets frequently contain label corruption, adversarial perturbations, and distributional contamination, under which likelihood-based training can become unstable. Robust divergence-based objectives, including density power divergence (DPD) and the broader $S$-divergence family, provide bounded influence and resistance against contaminated observations. However, existing theoretical analyses of these objectives in the neural network setting have proceeded under classical identifiability assumptions that deep models do not satisfy. A recent proposal \citep{jana2026rsdnet} proved Fisher consistency, Bayes-optimality, and robustness properties of multi-layer perceptrons (MLP) trained on the $S$-divergence loss, and demonstrated their empirical stability under label noise and adversarial perturbations. Their analysis, however, assumes a well-defined population minimizer, leaving open the question of whether these guarantees remain valid when the neural parameterization is non-identifiable.

The present work addresses this question directly. We propose a general framework for robust neural classifiers where parameter uniqueness is not assumed and statistical consistency is defined through convergence to the population minimizer set. This framework applies to modern architectures including MLPs, convolutional neural networks (CNNs), and Transformers.

\subsection{Contributions}
The proposed theory provides a bridge between classical M-estimation, robust statistics, and modern deep learning, establishing a principled foundation for robust neural classifiers beyond identifiable parametric models. Specifically

\begin{enumerate}[leftmargin=*,nolistsep]
    \item We prove that under mild regularity conditions, empirical $S$-divergence minimizers converge to the population-level equivalence class $\Theta_0$ without any parameter identifiability assumption (Theorem~\ref{thm:consistency}), and verify the regularity conditions for MLPs, CNNs, and Transformers (Lemma~\ref{lem:check_assumptions}).

    \item We propose a training algorithm to compute the empirical minimizer (Algorithm~\ref{alg:train}), and prove that limit points of this algorithm are stationary points of the empirical objective (Theorem~\ref{thm:stationarity}).

    \item We evaluate the $S$-divergence training procedure against eight competing losses on three computer vision benchmarks under benign and adversarial noise, confirming competitive clean-data accuracy and stability under contamination (Section~\ref{sec:experiments}).
\end{enumerate}

\subsection{Related Work}
\label{sec:literature}

Classical statistical learning theory has studied estimators under identifiable finite-dimensional models. Consistency of M-estimators was established through uniform laws of large numbers and argmax/argmin continuity arguments \citep{van2000asymptotic,newey1994large}. These results rely on convergence of empirical optimization problems to their population counterparts.

Deep neural networks fundamentally challenge this framework because parameterization is highly redundant in them~\citep{goodfellow2016deep}. Consequently, the parameter vector is generally not identifiable even when the predictive distribution is uniquely determined.
Recent work has investigated optimization landscapes of deep networks, showing that global minima often form connected manifolds of high dimensions rather than isolated points \citep{choromanska2015loss,nguyen2018loss}. However, these analyses primarily focus on optimization geometry and do not establish statistical consistency under non-identifiability. Our work differs by adopting the perspective of stochastic programming and set-valued estimation, where the target of inference is the population minimizer set rather than an individual parameter vector.

Modern deep learning systems are vulnerable to corrupted labels, distribution shifts, and adversarial examples. Extensive work has studied adversarial robustness and noise-resistant training strategies \citep{goodfellow2015explaining, madry2018towards}. Robust statistics provides a principled approach for accurate learning under such situations. The density power divergence loss \citep[DPD]{basu1998robust} enables estimators with bounded influence while retaining high efficiency under clean data. Its extensions, including the $S$-divergence family, generalize this robustness-efficiency tradeoff further \citep{ghosh2017generalized}.
Several robust losses have subsequently been adapted for machine learning, such as generalized cross entropy \citep{zhang2018}, symmetric cross entropy \citep{wang2019symmetric}, trimmed loss functions \citep{rusiecki2019trimmed}, DPD under the name of $\beta$-divergence \citep{ghosh2026provably}, $\alpha$-divergence \citep{wang2021alphanet}, $S$-divergence often via its transformed version known as $(\alpha, \beta)$-divergence \citep{cruces2026alpha}, and other noise-tolerant objectives.
The recent work of rSDNet \citep{jana2026rsdnet} provides a robust neural training procedure based on $S$-divergence minimization, demonstrating empirical stability under both label noise and adversarial perturbations. The present paper grounds this direction by providing the missing statistical theory for robust training techniques, with the $S$-divergence loss as an exemplar, which remains valid across modern deep learning architectures.

When population minimizers are not unique, classical convergence to a single estimator is replaced by convergence of solution sets. This framework appears naturally in stochastic programming and variational analysis \citep{shapiro2021lectures}. Under uniform convergence of empirical objectives, empirical minimizer sets converge to population minimizer sets through generalized argmin theorems.
Our analysis builds on these ideas and adapts them for robust neural networks. The resulting theory shows that deep learning does not require parameter identification for statistical consistency; instead, identification of the induced probability model is sufficient.

\section{Statistical Consistency of Neural Classifiers}
\label{sec:methodology}

Let $\mathcal D_n = (X_i,Y_i)_{i=1}^n$ be a dataset where $X_i\in \mathcal{X} \subset \R^k$ are i.i.d.\ random variables following the input distribution $P$, and $Y_i \in \{0,1\}$ be available labels. From this data, we wish to learn an underlying conditional model for probability $p(Y = 1 \mid X = x) := p(x)$. Suppose there exists a parametric form to this model:
\begin{equation}\label{eqn:nn_multi}
    p_\theta(x) = \frac{\exp(g_{\theta}(x))}{ 1+ \exp(g_{\theta}(x))}.
\end{equation}
Consider the problem of maximizing the (log)-likelihood of this model based on observed data:
\begin{align}\label{eqn:stoch_opt}
    \hat \theta_n =
    \arg\max_{\theta\in\Theta}
    \frac{1}{n}\sum_{i=1}^n
    \left[
        Y_i g_\theta(X_i)
        -
        \log\left\{1+\exp\left(g_\theta(X_i)\right)\right\}
    \right]
    := \arg\max_{\theta} \Psi_n(\theta).
\end{align}
Let the true conditional probability that needs to be learned equal $p_0(x)$, and define the true parameter set
\[
\Theta_0 = \left\{\theta\in \Theta : g_\theta(x) = p_0(x) , \forall x\in \mathcal{X}\right\}
\]
This is the maximiser of the population objective  $ \Psi(\theta) := \mathbb E[Y g_\theta(X) - \log(1+\exp(g_\theta(X)))] $ \citep[Sec.~5]{van2000asymptotic}.
Let $v^\star = \arg\min_{\theta\in \Theta_0} \Psi(\theta)$ denote the likelihood of models in $\Theta_0$.

\subsection{Convergence of the Solution}
\label{subsec:conv}
The objective is to show convergence without requiring identifiability (that is, for non-singleton $\Theta_0$). Note that the problem is a stochastic optimisation problem, where one needs to evaluate an optimal decision under uncertainty, captured by the distribution of the samples. Following \citet{shapiro2021lectures}, we make the following assumptions to establish regularity of solutions:

\begin{assumption}\label{assume:U-LLN1}
The samples $(X_i,Y_i) $ are i.i.d.
\end{assumption}

\begin{assumption}\label{assume:U-LLN2}
The function
    \[
    u(\cdot, x,y) = y g_\theta(x) - \log(1+\exp(g_\theta(x)))
    \]
is continuous at $\theta$ for almost every $(x,y)$.
\end{assumption}
An equivalent condition for Assumption~\ref{assume:U-LLN2} to hold is that $g_\theta(x)$ be continuous in $\theta$ for almost every $x$.
\begin{assumption}\label{assume:U-LLN3}
We have $|u(\theta,x,y)|\leq G(x,y)$ for all $\theta\in \Theta$ such that $\mathbb E_P[G(X,Y)]< \infty$.
\end{assumption}

Let $\Theta_n$ denote the maximizers of $\Psi_n$ over $\Theta$. We assume that this is non-empty for all large enough $n$.

\begin{theorem}\label{thm:consistency}
    Suppose that Assumptions~\ref{assume:U-LLN1}--\ref{assume:U-LLN3} hold, and the set $\Theta$ is compact. Then, with probability 1,
    \begin{equation}\label{eqn:convergence}
        \limsup_{n\to\infty } \Theta_n \subseteq \Theta_0 \text{ and } \Psi(\theta_n) \to v^\star,
    \end{equation}
for any selection $\theta_n\in \Theta_n$. In particular, if $\Theta_0 =\{\theta_0\}$ is a singleton, then  $\hat \theta_n\to \theta_0$ with probability 1.
\end{theorem}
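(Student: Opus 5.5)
The plan follows the standard argmax-consistency argument for stochastic programs with non-unique solutions, as in \citet[Ch.~5]{shapiro2021lectures}. It has two steps: a uniform law of large numbers, then a deterministic argument for sequences of maximizers.

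\textbf{Step 1 (uniform law of large numbers).} Using Assumptions~\ref{assume:U-LLN1}--\ref{assume:U-LLN3} and compactness of $\Theta$, I will show two things: $\Psi$ is finite and continuous on $\Theta$, and
\[
\sup_{\theta\in\Theta}|\Psi_n(\theta)-\Psi(\theta)|\to 0
\]
with probability 1. Continuity of $\Psi$ follows from dominated convergence, using Assumption~\ref{assume:U-LLN2} for almost sure continuity of $u(\cdot,x,y)$ and the integrable envelope $G$ from Assumption~\ref{assume:U-LLN3}. For uniform convergence I use the bracketing argument. Fix $\theta\in\Theta$ and let $B_k(\theta)$ be the ball of radius $1/k$ around it. Set
\[
\Delta_k(x,y)=\sup_{\theta'\in B_k(\theta)\cap\Theta}|u(\theta',x,y)-u(\theta,x,y)|.
\]
By continuity, $\Delta_k\to 0$ almost everywhere, and $\Delta_k\le 2G$, so $\mathbb E\Delta_k\to 0$. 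Given $\varepsilon>0$, choose for each $\theta$ a neighbourhood with $\mathbb E\Delta_k<\varepsilon$ and extract a finite subcover by compactness. Applying the strong LLN (via Assumption~\ref{assume:U-LLN1}) to finitely many $u(\theta_j,\cdot)$ and $\Delta_{k_j}$ gives $\limsup_n\sup_\theta|\Psi_n-\Psi|\le 2\varepsilon$ almost surely. Letting $\varepsilon\downarrow 0$ along a countable sequence gives the claim. One measurability caveat: the suprema should be taken over a countable dense subset, which is legitimate by the almost sure continuity.

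\textbf{Step 2 (argmax consistency).} I work on the probability-one event from Step 1. I interpret $\Theta_0$ as the set of maximizers of $\Psi$, as the paper states, and $v^\star$ as the common optimal value $\max_\Theta\Psi$. Take any selection $\theta_n\in\Theta_n$. By compactness, every subsequence has a further convergent subsequence $\theta_{n_k}\to\bar\theta$. For any $\theta_0\in\Theta_0$ we have $\Psi_{n_k}(\theta_{n_k})\ge\Psi_{n_k}(\theta_0)$. Uniform convergence together with continuity of $\Psi$ gives $\Psi_{n_k}(\theta_{n_k})\to\Psi(\bar\theta)$ and $\Psi_{n_k}(\theta_0)\to\Psi(\theta_0)=v^\star$. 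Hence $\Psi(\bar\theta)\ge v^\star$, so $\bar\theta\in\Theta_0$. Since every cluster point of every selection lies in $\Theta_0$, this is exactly $\limsup_n\Theta_n\subseteq\Theta_0$, and equivalently $\mathrm{dist}(\theta_n,\Theta_0)\to 0$.

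For the optimal values, note that
\[
\Bigl|\sup_\Theta\Psi_n-\sup_\Theta\Psi\Bigr|\le\sup_\Theta|\Psi_n-\Psi|\to 0 .
\]
Hence $\Psi(\theta_n)=\Psi_n(\theta_n)+o(1)=v^\star+o(1)$. In the singleton case, every subsequence of $\hat\theta_n$ has a further subsequence converging to $\theta_0$, so the whole sequence converges to $\theta_0$.

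\textbf{Main obstacle.} The main difficulty is Step 1: getting uniform convergence from continuity that holds only almost everywhere, while keeping the suprema measurable. Step 2 is purely deterministic. I would also have to reconcile the paper's notation. The definition of $\Theta_0$ via $g_\theta=p_0$ and the phrase ``$\arg\min$'' in $v^\star$ should be read as ``the maximizer set of $\Psi$'' and ``its optimal value''. The proof uses only this reading, so I will state it explicitly at the outset.
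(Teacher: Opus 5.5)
Your proposal is correct and follows the paper's route: a uniform law of large numbers (the paper cites Theorem~7.53 of Shapiro et al.), then the deterministic argmax-consistency step (the paper cites their Theorem~5.3 and uses the same triangle inequality for $\Psi(\theta_n)\to v^\star$), except that you prove both cited results inline via the bracketing and subsequence arguments. One point to tighten is your measurability remark: passing to a countable dense subset is justified only if $u(\cdot,x,y)$ is continuous on all of $\Theta$ outside a single null set, not merely at each fixed $\theta$ off a $\theta$-dependent null set; the single-null-set reading is what the paper's remark after Assumption~\ref{assume:U-LLN2} suggests, and it holds for every architecture in Lemma~\ref{lem:check_assumptions}.
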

\begin{proof}
    Assumptions~\ref{assume:U-LLN1}--\ref{assume:U-LLN3} imply the uniform convergence of the stochastic objective $\Psi_n$ defined in \eqref{eqn:stoch_opt} to the limit $\Psi$ over the set $\Theta$  \citep[Theorem~7.53]{shapiro2021lectures}. Thus, from \citet[Theorem~5.3]{shapiro2021lectures}, $\Psi_n(\theta_n) \to v^\star$ almost surely. Now,
    \[
    |\Psi(\theta_n) -v^\star| \leq \sup_{\theta\in \Theta}|\Psi_n(\theta) -\Psi(\theta)| + |\Psi_n(\theta_n) - v^\star|.
    \]
    Since $\Psi_n\to \Psi$ uniformly with probability 1 over $\Theta$, we further obtain
    $\Psi(\theta_n) \to v^\star$.  Further, given any sequence of solutions $\{\hat \theta_n\}_{n\geq 1}$ to the sample problem, $d(\hat \theta_n,\Theta_0) \to 0$ with probability 1, where $d(\theta,S) = \inf\{d(\theta,\theta^\prime) : \theta^\prime\in S\}$.
    Consequently, whenever a subsequence $\theta_{n_k}\in \Theta_{n_k}$ converges, its limit lies in $\Theta_0$. Equivalently, $\limsup_{n\to\infty} \Theta_n \subseteq \Theta_0$. The last assertion follows upon noting that when $\Theta_0 = \{\theta_0\}$, $d(\hat\theta_n,\Theta_0) = d(\hat \theta_n,\theta_0)$.
\end{proof}

\subsection{Network Architecture Specifications}
\label{sec:architectures}

We now specify the parametric form of $g_\theta$ in~\eqref{eqn:nn_multi} for
three standard architectures, and ensure that Assumptions~\ref{assume:U-LLN1}--\ref{assume:U-LLN3} are satisfied for them.  Throughout, let $z_0 = x \in \R^k$ denote the input, let $\theta_i \in \R^{d_i \times d_{i-1}}$ be the weight matrix at layer $i$ (with $d_0 = k$) of a neural network, and let $\phi_i : \R^{d_i} \to \R^{d_i}$ be an element-wise activation function applied after the $i$-th linear map. Define the pre-activation $s_i = \theta_i\, z_{i-1}$ and post-activation $z_i = \phi_i(s_i)$ at each layer. Following~\citet{rosati2026limits}, we write $D_{z_i} = \mathrm{diag}\!\bigl(\phi_i'(s_{i,1}),\ldots,\phi_i'(s_{i,d_i})\bigr)$ for the diagonal Jacobian of $\phi_i$ evaluated at $s_i$.

\paragraph{Multi-layer perceptron (MLP).}

An $L$-layer MLP computes
\begin{align*}
  g_\theta(x)
  &\;=\;
  \theta_{L+1}\,\phi_L\!\bigl(\theta_L\,\phi_{L-1}(\cdots\phi_1(\theta_1\,x)\cdots)\bigr)\\
  &\;=\;
(\theta_{L+1}\circ\phi_L\circ\theta_L\circ\cdots\circ\phi_1\circ\theta_1\,)(x),
\end{align*}
where $\theta_i\in\R^{d_i\times d_{i-1}}$ for $i=1,\ldots,L$ are hidden-layer
weight matrices and $\theta_{L+1}\in\R^{1\times d_L}$ is the classification head.  Bias
vectors $b_i\in\R^{d_i}$ may be absorbed into $\theta_i$ by appending a constant feature; we suppress them for notational clarity. The full parameter vector is $\theta=(\mathrm{vec}(\theta_1),\ldots,\mathrm{vec}(\theta_{L+1})) \in\Theta\subset\R^p$, with $p=\sum_{i=1}^{L+1}d_i\,d_{i-1}$.

For two adjacent hidden layers $i<j\le L$, the Hessian block
$\partial^2 g_\theta/\partial\theta_j\,\partial\theta_i$ decomposes into a product form $A\,B\,C$ \citep{rosati2026limits} where $B$ contains a
weight matrix $\theta_k$ ($i<k<j$) and $A,C$ are products of
activation Jacobians $D_{z_\ell}$ and Kronecker terms involving the input and
identity matrices.

\paragraph{Convolutional neural network (CNN).}
Let $\theta_i^{\mathrm{conv}} \in \R^{c_i\times c_{i-1}\times k_i\times k_i}$
denote a convolutional filter at layer $i$ with $c_i$ output channels,
$c_{i-1}$ input channels, and spatial kernel size $k_i$.  Define the
doubly-block-Toeplitz (or circulant, under periodic padding) matrix
$T(\theta_i^{\mathrm{conv}}) \in \R^{c_i h_i w_i \times c_{i-1} h_{i-1} w_{i-1}}$
that implements the linear convolution on a feature map of spatial size
$h_{i-1}\times w_{i-1}$.  The convolutional layer then has the same
linear-activation structure as the MLP:
\[
  s_i = T(\theta_i^{\mathrm{conv}})\,z_{i-1},
  \qquad
  z_i = \phi_i(s_i),
\]
so that $T(\theta_i^{\mathrm{conv}})$ plays the role of $\theta_i$ in the
generic composition $g_\theta(x) = \theta_{L+1}\circ\phi_L\circ
T(\theta_L^{\mathrm{conv}})\circ\cdots\circ\phi_1\circ
T(\theta_1^{\mathrm{conv}})\,x$.  Pooling operations (average or max) and
batch normalization, when present, are absorbed into $\phi_i$ or treated as
fixed linear maps between layers.  Because $T(\theta_i^{\mathrm{conv}})$ is a
structured linear operator, singular values of the Hessian blocks inherit the
same $A\,B\,C$ product structure, with $B = T(\theta_k^{\mathrm{conv}})^\top$
for an intermediate layer $k$.

\paragraph{Transformer (encoder block).}
Fix a sequence of $N$ token embeddings
$Z_0 = [z_0^{(1)},\ldots,z_0^{(N)}]^\top \in \R^{N\times d}$.  A single
Transformer encoder block~\citep{vaswani2017attention} with $H$ heads computes
\begin{align}
  \mathrm{Attn}(Z_0)
  &= \sum_{h=1}^{H}
     \mathrm{softmax}\!\Bigl(\frac{Z_0\,W_Q^{(h)}\bigl(W_K^{(h)}\bigr)^\top Z_0^\top}
     {\sqrt{d_h}}\Bigr)
     \times Z_0\,W_V^{(h)}\,W_O^{(h)},
     \label{eqn:mha}\\
  Z_1 &= \mathrm{LN}\!\bigl(Z_0 + \mathrm{Attn}(Z_0)\bigr),
     \label{eqn:res1}\\
  Z_2 &= \mathrm{LN}\!\bigl(Z_1 + \theta_2\,\phi(\theta_1\,Z_1^\top)\bigr),
     \label{eqn:ffn}
\end{align}
where $W_Q^{(h)}, W_K^{(h)}, W_V^{(h)} \in \R^{d\times d_h}$ and
$W_O^{(h)} \in \R^{d_h\times d}$ are the query, key, value, and output
projection matrices for head $h$, with $d_h = d/H$;
$\theta_1\in\R^{d_{\mathrm{ff}}\times d}$ and
$\theta_2\in\R^{d\times d_{\mathrm{ff}}}$ are the feed-forward network (FFN)
weight matrices; $\phi$ is the FFN activation (e.g.\ GELU); and $\mathrm{LN}$
denotes layer normalization.

When stacking $L$ such blocks, the full parameter vector $\theta$ collects
$$ \left\{ W_Q^{(h,\ell)}, W_K^{(h,\ell)}, W_V^{(h,\ell)}, W_O^{(h,\ell)},
\theta_1^{(\ell)}, \theta_2^{(\ell)} \right\}_{\ell=1}^{L}, $$
together with an initial embedding matrix $E\in\R^{|\mathcal V|\times d}$ and the final
classification head.  Within the FFN sub-block of each layer, the pair
$(\theta_1^{(\ell)}, \theta_2^{(\ell)})$ gives a two-layer MLP whose Hessian
blocks admit the same $A\,B\,C$ factorization: concretely,
$\partial^2 g_\theta/\partial\theta_2^{(\ell)}\,\partial\theta_1^{(\ell)}$
yields $B = \theta_1^{(\ell)\top}$ (or $\theta_2^{(\ell)\top}$) flanked by
activation-Jacobian and input-dependent factors $A$ and $C$.  For the
self-attention parameters, the Hessian blocks
$\partial^2 g_\theta/\partial\mathrm{vec}(W_V^{(h)})\,
\partial\mathrm{vec}(W_Q^{(h')})$ involve products of the attention weight
matrix $\mathrm{softmax}(\cdot)$, the input $Z_0$, and the projection
matrices, again yielding a matrix-product structure whose singular values are
controlled by the spectral norms of the constituent weight matrices.

\paragraph{Assumption check.}
In all three architectures, the classifier $p_\theta(x)$ of~\eqref{eqn:nn_multi} is obtained by applying the logistic link to the scalar output $g_\theta(x)$, and the parameter space $\Theta$ is the Cartesian product of the matrix spaces for all weight matrices.  The consistency result of Theorem~\ref{thm:consistency} applies to any
$g_\theta$ satisfying Assumptions~\ref{assume:U-LLN1}--\ref{assume:U-LLN3}, which each of these architectures does under standard regularity (bounded inputs, smooth or piecewise-smooth activations with integrable envelope). We capture this in Lemma~\ref{lem:check_assumptions} below.
\begin{lemma}\label{lem:check_assumptions}
Suppose $\Theta$ is compact and $\mathbb E_P[\|X\|]<\infty$. Let $\phi_i$ be continuous and there exist a $C_0$ such that $\max_{i}|\phi_i(t)|<C_0(1+|t|)$ for all $t$.
    Then, Assumptions~\ref{assume:U-LLN2}--\ref{assume:U-LLN3} are satisfied in each of the neural architectures above.
\end{lemma}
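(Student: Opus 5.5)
The plan is to reduce both assumptions to two properties of $g_\theta(x)$: continuity in $\theta$ for every $x$, and a bound of the form $\sup_{\theta\in\Theta}|g_\theta(x)|\le C(1+\|x\|)$. Once these hold, Assumption~\ref{assume:U-LLN2} follows because $u$ is a continuous function of $g_\theta(x)$, as noted after that assumption. For Assumption~\ref{assume:U-LLN3}, I will use the elementary inequality $0\le\log(1+e^t)\le \log 2+|t|$. Together with $y\in\{0,1\}$, it gives
\[
|u(\theta,x,y)|\le 2|g_\theta(x)|+\log 2 .
\]
It therefore suffices to take the envelope $G(x,y)=\log 2+2C(1+\|x\|)$. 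This envelope is integrable because $\mathbb E_P\|X\|<\infty$.

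Continuity is immediate for the MLP and CNN, since $g_\theta$ is a composition of maps jointly continuous in $(\theta,z)$. These maps are matrix products, the linear map $\theta\mapsto T(\theta^{\mathrm{conv}})$, and the continuous $\phi_i$. For the Transformer, the softmax, the products $Z_0W_QW_K^\top Z_0^\top$, and layer normalization are all continuous. Layer normalization is continuous wherever its variance is positive, or everywhere if the usual $\epsilon>0$ is included in the denominator, which I will assume.

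For the linear-growth bound I will induct over layers. Compactness gives $M:=\sup_{\theta\in\Theta}\max_i\|\theta_i\|_{\mathrm{op}}<\infty$, and likewise for $\|T(\theta_i^{\mathrm{conv}})\|_{\mathrm{op}}$, since $T$ is linear and hence continuous. The hypothesis on $\phi_i$ applied coordinatewise gives $\|\phi_i(s)\|\le C_0(\sqrt{d_i}+\|s\|)$. Hence $\|z_i\|\le C_0(\sqrt{d_i}+M\|z_{i-1}\|)$, and iterating yields $|g_\theta(x)|\le C(1+\|x\|)$ with $C$ depending only on $M$, $C_0$, $L$ and the widths. Pooling is handled as a fixed linear map or an activation of linear growth: max pooling is $1$-Lipschitz and vanishes at $0$. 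Batch normalization at inference is affine with bounded coefficients, and the same bound applies.

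The Transformer is the main obstacle, for two reasons. First, the attention term is not linear in $Z_0$; second, the input is discrete tokens passed through the embedding $E$. I will handle both in the following order.
\begin{enumerate}
\item Each row of $\mathrm{softmax}(\cdot)$ is a probability vector, so $\|\mathrm{softmax}(\cdot)\,Z_0W_VW_O\|\le \|Z_0\|\,\|W_V\|\,\|W_O\|$ regardless of the logits. The attention output therefore grows at most linearly in $\|Z_0\|$, uniformly over compact $\Theta$.
\item With the $\epsilon$-regularized layer normalization, each output row has norm at most $\sqrt d\,|\gamma|_\infty+\|\beta\|$, bounded over compact $\Theta$. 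After the first LN, all subsequent quantities are therefore uniformly bounded.
\item The embedding $Z_0=E$-lookup (plus positional encodings) is bounded by $\max\|E\|$ over the compact parameter set, since the vocabulary is finite.
\end{enumerate}
Together these show that $\sup_\theta|g_\theta(x)|$ is in fact uniformly bounded, a stronger conclusion than linear growth. If $Z_0$ is instead treated as a continuous input $x$, steps 1 and 2 still give linear growth. Either way the envelope is integrable under $\mathbb E_P\|X\|<\infty$, completing the verification of Assumption~\ref{assume:U-LLN3} for all three architectures.
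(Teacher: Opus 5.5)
Your proposal is correct and follows essentially the same route as the paper. The shared steps are continuity of $\theta\mapsto g_\theta(x)$ by composition, and a layerwise induction using $\|\phi_i(s)\|\le C_0(\sqrt{d_i}+\|s\|)$ and compactness, with CNNs reduced to MLPs via the linear map $\theta\mapsto T(\theta^{\mathrm{conv}})$. The envelope then comes from $\log(1+e^t)\le\log 2+|t|$, and for Transformers you combine the row-stochastic softmax bound with the uniform boundedness of $\epsilon$-regularized layer normalization.

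One small sharpening: even when $Z_0$ is a continuous input, your steps 1--2 already give $\sup_{\theta}|g_\theta(x)|$ bounded by a constant, not merely linear growth. This matches the paper's observation that the Transformer envelope needs no moment condition on $X$.
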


\noindent \textit{Proof sketch.}
For each architecture, we verify two properties: (a)~continuity of $g_\theta(x)$ in $\theta$ for every $x$ (see the comment following Assumption~\ref{assume:U-LLN2}), and (b)~existence of an integrable envelope $G(x,y)$ dominating $|u(\theta,x,y)|$ uniformly over $\Theta$ (Assumption~\ref{assume:U-LLN3}).

For MLPs, (a) follows from the fact that $g_\theta$ is a finite composition of continuous maps (linear maps and activations). For (b), the growth condition $|\phi_i(t)| \le C_0(1+|t|)$ and compactness of $\Theta$ yield, by induction over layers, a bound $\sup_{\theta \in \Theta}|g_\theta(x)| \le a + b\|x\|$ for finite constants $a,b$ depending only on the architecture and $\sup_{\theta \in \Theta}\|\theta_i\|$. The envelope $G(x,y) = 2(a+b\|x\|) + \log 2$ is integrable by the first moment condition on $X$.

For CNNs, (a) and (b) follow identically once we observe that the map from convolutional filter weights to the corresponding Toeplitz matrix is linear (hence continuous), so each convolutional layer has the same linear-activation structure as the MLP case.

For Transformers, (a) holds because every constituent operation---the bilinear query-key product, softmax, linear projections, FFN activations, and layer normalization---is continuous in $\theta$. For (b), the key observation is that layer normalization produces outputs with uniformly bounded norm ($\|\mathrm{LN}(v)\| \le \Gamma$ for a constant $\Gamma$ depending only on the embedding dimension), and since each encoder block terminates in layer normalization, the bound does not accumulate across layers. This yields a \emph{constant} envelope $G(x,y) = 2M_0\Gamma + \log 2$, so Assumption~\ref{assume:U-LLN3} holds without any moment condition on $X$. The full proof is given in Appendix~\ref{app:proof-lemma1}.
\hfill$\square$

\section{Computational Algorithm}
\label{sec:algorithm}

We now describe a training algorithm for robust neural classifiers based on S-divergence minimisation. For a $J$-class classification problem with training data $\mathcal{D}_n$, where $Y_i \in \{e_1, \ldots, e_J\}$ are one-hot encoded labels and the network outputs class probabilities $p(X_i; \theta) = (p_1(X_i; \theta), \ldots, p_J(X_i; \theta))^\top$ via a softmax layer, the empirical S-divergence loss \citep{ghosh2017generalized,jana2026rsdnet} is
\begin{align}\label{eqn:sdloss}
  L_{\beta,\lambda}^{\mathrm{SD}}(\theta)
  &=
  \frac{1}{n}\sum_{i=1}^n
  \left\{
    \frac{1}{B}\sum_{j=1}^{J} p_j(X_i;\theta)^{1+\beta}
    - \frac{1+\beta}{B}\prod_{j=1}^{J} p_j(X_i;\theta)^{y_{ij}} \right\},
\end{align}
where $B = \beta - \lambda(1-\beta)$, with validity constraints $A \equiv 1 + \lambda(1-\beta) > 0$ and $B > 0$. Setting $\beta = \lambda = 0$ recovers the categorical cross-entropy (CCE) loss, and $\lambda = 0$ reduces to the density power divergence (DPD) loss. The tuning parameters $(\beta,\lambda)$ control the robustness--efficiency trade-off: larger $\beta$ increases down-weighting of outlying observations at the cost of statistical
efficiency.

The loss in Eq.~\eqref{eqn:sdloss} depends only on the network weights $\theta$ through the softmax probabilities, yielding a single-loop stochastic gradient procedure. This simplification holds for all architectures considered in Section~\ref{sec:methodology}---MLPs, CNNs, and Transformers---since the loss interacts with the architecture only through the predicted probability vector $p(x;\theta)$.

Algorithm~\ref{alg:train} gives the training procedure. The algorithm is architecture-agnostic: the forward pass in Step~5 computes $p(X_i; \theta)$ using whichever architecture $g_\theta$ is specified, followed by a softmax output layer. Backpropagation in Step~7 computes gradients through the same computational graph via automatic differentiation. The only component specific to robust training is the loss computation in Step~6, which replaces the standard cross-entropy with the S-divergence objective~\eqref{eqn:sdloss}.

\begin{algorithm}[t]
\caption{Robust Training for Neural Network Classifiers}
\label{alg:train}
\begin{algorithmic}[1]
\REQUIRE Training data $\mathcal{D}_n = \{(X_i, Y_i)\}_{i=1}^n$;
  network $g_\theta$;
  S-divergence parameters $(\beta, \lambda)$;
  learning rate $\eta$; batch size $m$; number of epochs $T$
\ENSURE Trained parameters $\hat\theta$
\STATE Initialize $\theta^{(0)}$ randomly
\FOR{$t = 0, 1, \ldots, T-1$}
  \STATE Randomly partition $\mathcal{D}_n$ into mini-batches
    $\mathcal{B}_1, \ldots, \mathcal{B}_{\lceil n/m \rceil}$
    of size $m$
  \FOR{each mini-batch $\mathcal{B}$}
    \STATE Compute class probabilities
      $p(X_i; \theta^{(t)})$ via forward pass with softmax output,
      for all $(X_i, Y_i) \in \mathcal{B}$
    \STATE Compute mini-batch S-divergence loss $ \hat L_{\beta,\lambda}^\mathrm{SD} (\theta^{(t)}) $
    \STATE Compute gradient
      $\nabla_\theta \hat L_{\beta,\lambda}^{\mathrm{SD}}(\theta^{(t)})$
      via backpropagation
    \STATE Update: $\theta^{(t+1)} \leftarrow
      \mathrm{Adam}\!\left(\theta^{(t)},\,
      \nabla_\theta \hat L_{\beta,\lambda}^{\mathrm{SD}}(\theta^{(t)}),\,
      \eta\right)$
  \ENDFOR
\ENDFOR
\RETURN $\hat\theta = \theta^{(T)}$
\end{algorithmic}
\end{algorithm}

The following result establishes that the iterates of Algorithm~\ref{alg:train} converge to stationary points of the empirical objective. Let $\{\theta^{(t)}\}_{t \ge 1}$ denote the sequence of iterates, and let $\theta_0 \in \Theta_0$ be a member of the true parameter set.

\begin{theorem}
\label{thm:stationarity}
Fix the training sample $\mathcal{D}_n = (X_i, Y_i)_{i=1}^n$. Suppose the iterates
$\{\theta^{(t)}\}_{t\ge1}$ of Algorithm~\ref{alg:train} are uniformly bounded
by a quantity depending only on the sample size and network architecture:
\[
  \big\| \theta^{(t)} - \theta_0 \big\| \;\le\; R(n, \dim\Theta),
  \qquad t \ge 1 .
\]
Then any limit point $\theta^{\infty}$ of $\{\theta^{(t)}\}$ is a stationary
point of $L_{\beta,\lambda}^{\mathrm{SD}}(\cdot)$, i.e.\ a point at which
\[
  \nabla_\theta L_{\beta,\lambda}^{\mathrm{SD}}(\theta^{\infty}) = 0.
\]
\end{theorem}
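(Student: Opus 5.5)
Since the iterates are assumed to lie in the ball $\mathcal{K} := \{\theta : \|\theta-\theta_0\|\le R(n,\dim\Theta)\}$, the plan is to treat Algorithm~\ref{alg:train} as a stochastic approximation scheme on a compact set. We then use the standard descent-lemma and ODE-method arguments for nonconvex stochastic gradient methods.

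\textbf{Step 1: regularity on $\mathcal{K}$.} Assuming the activations are $C^2$ (or piecewise smooth with the usual a.e.\ convention), $L_{\beta,\lambda}^{\mathrm{SD}}$ in \eqref{eqn:sdloss} is a smooth function of the softmax probabilities. These probabilities are smooth in $\theta$, as the compositional structure of Section~\ref{sec:architectures} shows. Note also that $p_j^{1+\beta}$ and $\prod_j p_j^{y_{ij}}$ are smooth on $[0,1]$ when $\beta>0$. Since the sample is fixed, the loss is a finite sum, so on the compact set $\mathcal{K}$ the gradient is bounded by some $G_n$ and is $\ell_n$-Lipschitz. The constants $G_n,\ell_n$ depend only on $n$, $R$ and the architecture. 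This is where the bounded-iterate hypothesis is used: neural losses are not globally $L$-smooth, but the Hessian block products $A\,B\,C$ of Section~\ref{sec:architectures} have spectral norms controlled by the weight norms, which are uniformly bounded on $\mathcal{K}$.

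\textbf{Step 2: noise structure.} The mini-batch gradient is an unbiased estimate of $\nabla L^{\mathrm{SD}}_{\beta,\lambda}(\theta^{(t)})$ under random partitioning. Its error has variance bounded by $G_n^2$ on $\mathcal{K}$. We need learning rates satisfying $\sum_t\eta_t=\infty$ and $\sum_t\eta_t^2<\infty$; this is implicit in the statement, and I would make it explicit. With these, the accumulated martingale noise $\sum_t \eta_t \xi_t$ converges almost surely, by the $L^2$ martingale convergence theorem.

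\textbf{Step 3: descent and limit points.} Apply the descent lemma on $\mathcal{K}$:
\[
L(\theta^{(t+1)})\le L(\theta^{(t)})-\eta_t\langle\nabla L(\theta^{(t)}),d_t\rangle+\tfrac{\ell_n}{2}\eta_t^2\|d_t\|^2,
\]
where $d_t$ is the update direction. Summing over $t$, using that $L$ is bounded below on $\mathcal{K}$, and combining with Step 2 gives $\sum_t\eta_t\|\nabla L(\theta^{(t)})\|^2<\infty$ almost surely. To pass from this liminf-type statement to one about all limit points, use the standard argument (e.g.\ Bertsekas--Tsitsiklis), or equivalently the Kushner--Yin/Benaïm ODE method. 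The interpolated iterates form an asymptotic pseudotrajectory of $\dot\theta=-\nabla L(\theta)$. The function $L$ is a strict Lyapunov function for this flow, so the limit set is contained in $\{\nabla L=0\}$; this holds under the mild condition that the critical values have empty interior, which Sard's theorem gives for smooth $L$. Every limit point $\theta^\infty$ therefore satisfies $\nabla L_{\beta,\lambda}^{\mathrm{SD}}(\theta^\infty)=0$.

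\textbf{Main obstacle.} The hard part is the Adam update. Its direction $d_t=m_t/(\sqrt{v_t}+\epsilon)$ is a preconditioned, momentum-averaged gradient. It is not unbiased, and Adam with constant $\beta_2$ is known to fail to converge in general. I would first handle plain SGD as above. I would then extend the argument by treating Adam's preconditioner as a diagonal matrix whose entries lie in $[(G_n+\epsilon)^{-1},\epsilon^{-1}]$ on $\mathcal{K}$, so that $\langle\nabla L,d_t\rangle\gtrsim\|\nabla L\|^2$ up to a momentum bias. The momentum bias would be controlled by a two-timescale or AMSGrad-type condition, for example $\beta_{2,t}\to1$ suitably, or a decaying $\eta_t$.
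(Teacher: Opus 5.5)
Your proposal does not prove the statement as written, and the gap is exactly where you place the obstacle. Algorithm~\ref{alg:train} is vanilla Adam with a fixed learning rate $\eta$. The Robbins--Monro condition $\sum_t\eta_t=\infty$, $\sum_t\eta_t^2<\infty$ is therefore not implicit in the statement; it is an additional hypothesis, and a necessary one.

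For instance, take a single logit parameter $\theta$ with $J=2$ and $\beta>0$, a sample with one input observed once with each label, and batch size one. Each per-sample S-divergence gradient has a fixed nonzero sign at every finite $\theta$, so the iterates cannot converge. Convergence would force the steps, and hence the mini-batch gradients, to tend to $0$. For plain gradient steps this is immediate. For Adam with fixed $(\eta,\beta_1,\beta_2)$ it holds because $\hat v_t$ stays bounded, so $m_t\to0$. The bounded iterates therefore have at least two limit points, while the empirical loss has the single stationary point $\theta=0$.

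Your Adam extension also does not go through as sketched. Confining the diagonal preconditioner to $[(G_n+\epsilon)^{-1},\epsilon^{-1}]$ does not give $\mathbb{E}\langle\nabla L,d_t\rangle\gtrsim\|\nabla L\|^2$, because $v_t$ is built from the current stochastic gradient and is correlated with it. Removing that correlation, not just the momentum bias, is what AMSGrad-type or $\beta_{2,t}\to1$ conditions do, and those conditions change the algorithm. What you would actually establish is a stationarity theorem for a modified scheme: SGD or AMSGrad-type Adam, with diminishing steps and smooth activations.

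The paper takes a different, deterministic route. It asserts that Algorithm~\ref{alg:train} is a descent method ``by construction'', i.e.\ $L^{\mathrm{SD}}_{\beta,\lambda}(\theta^{(t+1)})\le L^{\mathrm{SD}}_{\beta,\lambda}(\theta^{(t)})$ with equality only at stationary points. It then follows the template of Lin et al.:
boundedness gives a subsequence $\theta^{(t_k)}\to\theta^\infty$;
monotonicity and a lower bound give $L^{\mathrm{SD}}_{\beta,\lambda}(\theta^{(t)})\to c^\infty=L^{\mathrm{SD}}_{\beta,\lambda}(\theta^\infty)$;
a Zangwill-type argument shows a non-stationary $\theta^\infty$ would produce a further strict decrease, a contradiction.
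Step sizes, mini-batch noise and Adam's state never enter. Your obstacle is simply absorbed into the descent assertion, which is not derived. By the example above it fails for mini-batch updates: from the stationary point $0$ every step strictly increases the loss.

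So some explicit extra hypothesis is unavoidable. If you adopt the paper's descent property as that hypothesis, the compactness--monotonicity--Zangwill argument finishes in a few lines and your Steps~1--3 are unnecessary. If you keep your stochastic-approximation route, which buys a genuine treatment of the noise, state the step-size and preconditioner conditions as hypotheses and make the following repairs.

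\emph{Step~2.} Random partitioning samples without replacement. Conditional on the batches already used in an epoch, the next mini-batch gradient is not unbiased for $\nabla L$, so the errors are not martingale differences. Use instead the epoch-level random-reshuffling argument: the mini-batch gradients at the epoch's starting point average exactly to $\nabla L$ (with weights $|\mathcal{B}|/n$), and within-epoch drift costs $O(\eta_t^2)$ per step.

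\emph{Step~3.} Sard's theorem needs $C^{\dim\Theta}$ smoothness, not $C^2$. Either assume $C^\infty$ activations, or use the Bertsekas--Tsitsiklis argument, which needs only a Lipschitz gradient on $\mathcal{K}$.

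\emph{Step~1.} The ``piecewise smooth'' parenthetical supports neither route and should be dropped.
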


\begin{proof}
The proof follows the structure of Theorem~5 of \citet{majumdar22jmmle}, which generalizes Theorem~1 of \citet{linetal16}, substituting $\big(\hat\theta^{(t)}, \theta_0\big)$ and
$(\theta^\infty)$ for the pairs $\big(\hat B^{(t)},\hat\Theta_y^{(t)}\big)$,
$(B_0,\Theta_{y0})$ and $(B^\infty,\Theta_y^\infty)$ therein, with the
corresponding modifications below.

Since Assumptions~\ref{assume:U-LLN2}--\ref{assume:U-LLN3} ensure
$u(\theta,x,y) = y\,g_\theta(x) - \log\{1+\exp(g_\theta(x))\}$ is
continuous in $\theta$ for a.e.\ $(x,y)$ and dominated by an integrable
envelope, the same regularity carries over to the pointwise S-divergence
contribution
\[
  v_\theta(x,y) \;=\;
  \frac{1}{B}\sum_{j=1}^{J} p_j(x;\theta)^{1+\beta}
  - \frac{1+\beta}{B}\prod_{j=1}^{J} p_j(x;\theta)^{y_j},
\]
which is a smooth (indeed $C^\infty$) function of $g_\theta(x)$ for fixed
$\beta>0$. Hence $L_{\beta,\lambda}^{\mathrm{SD}}(\theta) = n^{-1}\sum_i
v_\theta(X_i,Y_i)$ is continuously differentiable in $\theta$ whenever
$g_\theta$ is (e.g.\ for a network with smooth activations).

By construction, Algorithm~\ref{alg:train} is a descent algorithm for
$L_{\beta,\lambda}^{\mathrm{SD}}$: each iterate satisfies
$L_{\beta,\lambda}^{\mathrm{SD}}(\theta^{(t+1)}) \le
L_{\beta,\lambda}^{\mathrm{SD}}(\theta^{(t)})$,
with equality only at stationary points. The
boundedness assumption $\|\theta^{(t)} - \theta_0\| \le R(n,\dim\Theta)$
implies $\{\theta^{(t)}\}$ lies in a compact subset of $\Theta$, so it admits
at least one limit point $\theta^{\infty}$, and some subsequence
$\theta^{(t_k)} \to \theta^{\infty}$.

Because $L_{\beta,\lambda}^{\mathrm{SD}}$ is continuous and monotonically
non-increasing along the sequence, and bounded below by $0$ (the
S-divergence is non-negative by Theorem~3.1 of \citealp{ghosh2017generalized}),
$L_{\beta,\lambda}^{\mathrm{SD}}(\theta^{(t)})$ converges to some limit
$c^\infty \ge 0$, and by continuity
$L_{\beta,\lambda}^{\mathrm{SD}}(\theta^{(t_k)}) \to
L_{\beta,\lambda}^{\mathrm{SD}}(\theta^{\infty}) = c^\infty$. Standard
arguments for descent algorithms (Zangwill's global convergence theorem;
cf.\ the proof of Theorem~1 in \citealt{linetal16}) then show that if
$\theta^\infty$ were not a stationary point, continuity of
$\nabla_\theta L_{\beta,\lambda}^{\mathrm{SD}}$ together with the descent
property would produce a further strict decrease in a neighborhood of
$\theta^\infty$ along the subsequence, contradicting convergence of
$L_{\beta,\lambda}^{\mathrm{SD}}(\theta^{(t)})$ to $c^\infty$. Hence
$\nabla_\theta L_{\beta,\lambda}^{\mathrm{SD}}(\theta^{\infty}) = 0$.
\end{proof}

Theorem~\ref{thm:stationarity} places no
explicit rate requirement on $R(n,\dim\Theta)$: any finite bound on the
iterates suffices for existence of a stationary limit point. A tighter bound,
e.g.\ $R(n,\dim\Theta) = O_P(\sqrt{\dim\Theta \log n / n})$ obtained from the
finite-sample robustness bounds for minimum S-divergence estimators
\citep{ghosh2015continuous,ghosh2017generalized}, additionally ensures
that the resulting stationary point lies close to $\Theta_0$ with high
probability.

\section{Experiments}
\label{sec:experiments}

We evaluate the proposed S-divergence (SDIV) loss against eight other loss functions and three classification benchmarks, with the goal of characterising the accuracy of SDIV-trained models on clean data and their robustness under (i)~uniform label noise, (ii)~FGSM adversarial perturbations, and (iii)~variation of the SDIV tuning parameters $(\beta,\lambda)$.

\subsection{Setup}
\label{sec:setup}

To evaluate the performance of our method, we perform experiments on six benchmark datasets from two domains: four from computer vision: MNIST, CIFAR10, PathMNIST, and DermaMNIST~\citep{MedMNIST-PathMNIST+DermaMNIST}, and two from NLP: Emotion~\citep{saravia2018} and PubMedQA~\citep{PubMedQA}. Nine classification objectives are compared: SDIV, standard categorical cross-entropy (CCE) as a non-robust baseline, mean absolute error (MAE) on predicted probabilities, Generalised Cross-Entropy~\citep[GCE]{zhang2018}, GCE applied only to samples with $p_y < 0.5$, i.e.\ confidence masking (TruncGCE), Symmetric Cross-Entropy~\citep{wang2019symmetric},  Truncated DPD combined with symmetric CCE (TDPD-CCE), Truncated Symmetric Categorical Cross-Entropy: equivalent to SDIV at $\lambda{=}0$ (TSCCE), and Fractional Classification Loss \citep[FCL]{kurucu2025introducingfractionalclassificationloss}.

As the model architecture we use a vanilla Transformer encoder paired with the SDIV loss whose architecture mirrors the patch-based ViT design~\citep{dosovitskiy2021} at small scale. We decompose MNIST and CIFAR-10 images into $7{\times}7$ non-overlapping patches, and resize PathMNIST and DermaMNIST images to $64{\times}64$ then decompose into $8{\times}8$ patches. All vision models are trained with the Adam optimizer ($\eta_0{=}10^{-3}$, $\beta_1{=}0.9$, $\beta_2{=}0.999$) for 30 epochs, batch size 256.
For NLP tasks we fine-tune \texttt{bert-base-uncased}~\citep{devlin2019} with a linear classification head, using Adam ($\eta_0{=}2{\times}10^{-5}$) with batch size 32 for 3 epochs.

\subsection{Clean Data Performance}
\label{sec:clean}
We begin with a tuning-parameter sweep for SDIV to quantify performance variation across values of $(\beta,\lambda)$. The results in Table~\ref{tab:sdiv_surface_paper} indicate that lower values of $\beta$ ($\beta \leq 0.2$) and higher values of $\lambda$ ($\lambda \geq -0.4)$ lead to both algorithmic stability and better performance. Table~\ref{tab:clean_med} presents a comparison of test accuracy for the tuned SDIV model with other methods. SDIV narrowly achieves highest accuracy on both datasets. However, the narrow between-method differences indicate that a number of the methods achieve accuracy on both datasets that are statistically similar to CCE, the non-robust alternative. In DermaMNIST, the performance of three methods (GCE, TruncGCE, MAE) is the same as the prevalence of the majority-class in the dataset (66.88\%), indicating convergence issues leading to constant prediction across all samples.

\begin{table}[ht]
\centering
\setlength{\tabcolsep}{3pt}
\begin{tabular}{ccrr}
\toprule
$\beta$ & $\lambda$ & PathMNIST & DermaMNIST \\
\midrule
0.02 & $-$0.80 & 77.87               & 66.88$^{\dagger}$ \\
0.02 & $-$0.40 & 83.23               & 71.42             \\
0.02 & $+$0.00 & 83.68               & 72.87             \\
\midrule
0.05 & $-$0.80 & 82.60   & 66.88$^{\dagger}$ \\
0.05 & $-$0.40 & \textbf{84.11}      & 71.57             \\
0.05 & $+$0.00 & 82.10               & 72.32             \\
\midrule
0.10 & $-$0.80 & 81.52               & 66.88$^{\dagger}$ \\
0.10 & $-$0.40 & 83.69               & \textbf{73.32}    \\
0.10 & $+$0.00 & 82.06               & 72.07             \\
\midrule
0.20 & $-$0.80 & 82.45               & 67.13             \\
0.20 & $-$0.40 & 82.41               & 72.87             \\
0.20 & $+$0.00 & 83.33               & 71.77             \\
0.20 & $+$0.20 & 82.59               & 71.57             \\
\midrule
0.50 & $-$0.80 & 81.66               & 66.88$^{\dagger}$ \\
0.50 & $-$0.40 & 82.62               & 70.52             \\
0.50 & $+$0.00 & 82.63               & 71.12             \\
0.50 & $+$0.20 & 82.60               & 71.47             \\
\bottomrule
\end{tabular}%
\caption{%
SDIV $(\beta,\lambda)$ tuning surface: test accuracy (\%) on PathMNIST and DermaMNIST, with \textbf{Bold} indicating best per dataset. $^{\ddagger}$Values equal the DermaMNIST majority-class prevalence, indicating a degenerate constant-class predictor.}
\label{tab:sdiv_surface_paper}
\end{table}
\begin{table}[ht]
\centering
\setlength{\tabcolsep}{3pt}
\begin{tabular}{lcccc}
\toprule
Loss & MNIST & CIFAR-10 & DermaMNIST & PathMNIST \\
\midrule
CCE                  & \underline{98.19}          & \underline{60.56}       & \underline{73.22}     & 83.02                      \\
MAE                  & --                         & --                      & 66.88$^{\ddagger}$    & 78.50                      \\
GCE      & 98.36                      & 56.52                   & 66.93$^{\ddagger}$    & 82.24                      \\
TruncGCE             & --                         & --                      & 66.93$^{\ddagger}$    & 78.20                      \\
SCE                  & 98.18                      & 59.05                   & 70.22                 & 83.06                      \\
TDPD-CCE             & \textbf{98.50}             & \textbf{61.16}          & 72.32                 & 82.10                      \\
TSCCE                & 86.23                      & 52.94                   & 70.82                 & 82.26                      \\
FCL                  & 9.80$^{\ddagger}$          & 59.06                   & 72.57        & \underline{83.61}             \\
SDIV (ours) & 97.94                      & 55.06                   & \textbf{73.32}          & \textbf{84.11}   \\
\bottomrule
\end{tabular}%
\caption{Test accuracy (\%) under clean labels. \textbf{Bold}, \underline{underlined}, and $^{\ddagger}$ values indicate best, second best, and constant-class predictions, respectively.}
\label{tab:clean_med}
\end{table}

\subsection{Robustness to Noisy Labels}
\label{sec:noise}

We inject noise in the image labels by randomly selecting $\eta\%$ of training images and flipping their labels, with $\eta \in \{0,10,20,30,40\}$. Figure~\ref{fig:noise} shows the results.
On MNIST, SDIV and GCE are the most noise-tolerant losses. The narrow differences of SDIV accuracy values between $\eta=0$ and $\eta=40$ are consistent with the theoretical prediction that the SDIV loss down-weights corrupted samples.
On DermaMNIST, we see the recurring pattern of decreasing accuracy with increasing noise labels. SDIV has convergence problems at intermediate noise levels, which it recovers from and reaches highest accuracy at $\eta = 40$.
On PathMNIST, most losses maintain stable accuracy across noise rates, except MAE. The near-flat trajectories indicate a ceiling effect on this strongly structured dataset. MAE is a notable outlier: at $\eta{=}0.1$ it collapses to 48.76\% and only partially recovers at higher noise levels.

\begin{figure}[ht]
    \centering
    \includegraphics[width=\linewidth]{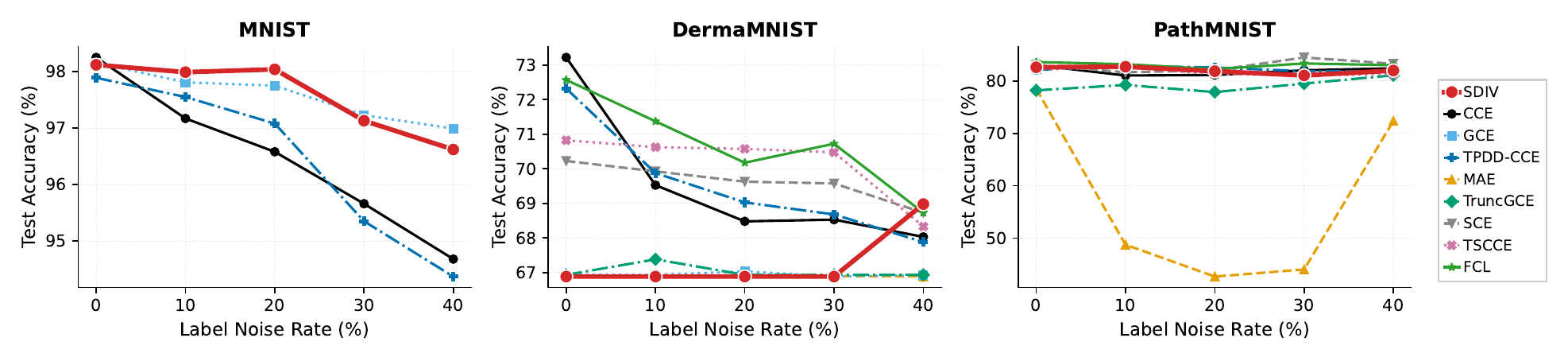}
    \caption{Test accuracy (\%) under uniform label noise. Missing methods in each subplot gave constant prediction.}
    \label{fig:noise}
\end{figure}
\begin{figure}[ht]
    \centering
    \includegraphics[width=\linewidth]{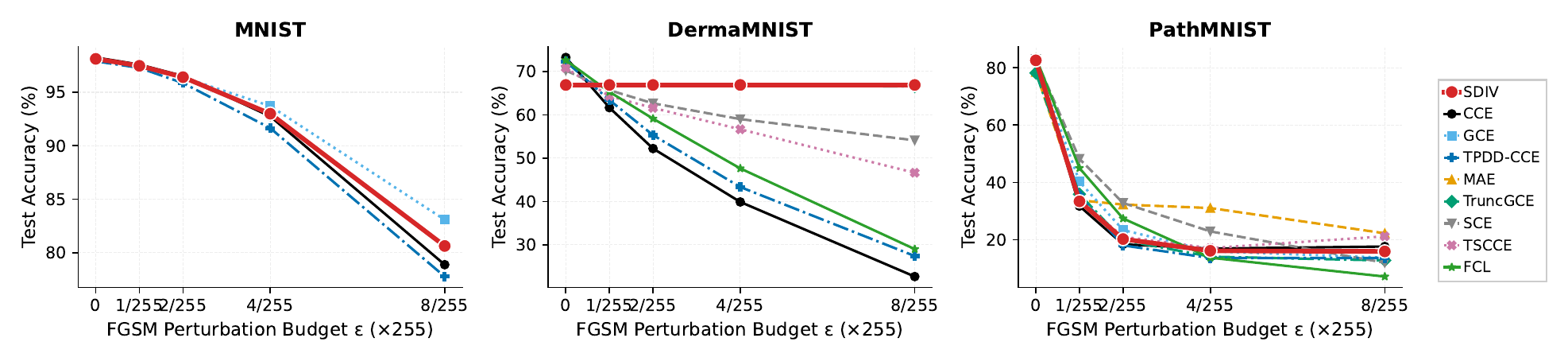}
    \caption{Test accuracy (\%) under adversarial perturbations. Missing methods in each subplot gave constant prediction.}
    \label{fig:fgsm}
\end{figure}

\subsection{Adversarial Robustness}
\label{sec:advrob}

We construct adversarial examples using the Fast Gradient Sign Method~\citep[FGSM]{goodfellow2015explaining}, with perturbation budgets $\varepsilon \in \{0, 1, 2, 4, 8\}/255$. Gradients are computed using the CCE loss uniformly (to prevent information leakage from the training loss into the attack). Figure~\ref{fig:fgsm} summarises the results.
Under FGSM at $\varepsilon{=}8/255$, GCE is the most robust on MNIST, followed by SDIV (80.63\%). SDIV has highest accuracy in DermaMNIST. In general, all methods retain high accuracy relative to their clean baselines. PathMNIST is far more vulnerable to the FGSM attack. At $\varepsilon{=}8/255$, all losses show catastrophic degradation. No loss provides genuine resistance to FGSM on PathMNIST, indicating that robust training losses alone without explicit adversarial training~\citep{madry2018towards}, do not confer adversarial robustness.

\subsection{Robust Fine-tuning on Text Data}
\label{sec:nlp}

Emotion~\citep{saravia2018} is a 6-class emotion recognition task, and PubMedQA~\citep{PubMedQA} is a 3-class biomedical question-answering task. Table~\ref{tab:nlp_results} reports the validation accuracy for our methods. SDIV attains the highest accuracy on both datasets, narrowly ahead of the strongest baselines. Also notable is the stability of SDIV relative to the other robust losses: several methods (SCE and TDPD-CCE on both datasets, GCE and FCL on PubMedQA) fail to converge to a non-trivial predictor, whereas SDIV trains stably on both tasks. This pattern is consistent with the vision experiments, where SDIV in its recommended parameter range avoids the gradient-starvation collapse that affects more aggressive robust losses. This supports the paper's central claim: because the consistency guarantee is architecture-agnostic, the same S-divergence objective transfers from convolutional and vision-Transformer classifiers to a pretrained language model without modification.

\begin{table}[t]
\centering
\setlength{\tabcolsep}{3pt}
\begin{tabular}{lcc}
\toprule
Loss & Emotion & PubMedQA \\
\midrule
CCE                  & 57.25                      & 56.00          \\
MAE                  & 57.60                      & 55.33          \\
GCE          & 58.20                      & --             \\
TruncGCE             & 58.10                      & 58.00 \\
SCE                  & --                          & --             \\
TDPD-CCE             & --                          & --             \\
TSCCE                & 57.80                      & 56.00          \\
FCL                  & 57.70                      & --             \\
SDIV (ours)          & \textbf{58.50}              & \textbf{58.67} \\
\bottomrule
\end{tabular}%
\caption{Validation accuracy (\%) on NLP datasets. \textbf{Bold} indicates best, and ``--'' in a cell indicates convergence issues for that method $\times$ dataset.}
\label{tab:nlp_results}\end{table}

\section{Discussion}
\label{sec:discussion}

The consistency framework of Section~\ref{sec:methodology} guarantees that any sequence of empirical S-divergence minimizers converges to the population-optimal equivalence class, and Lemma~\ref{lem:check_assumptions} confirms that this guarantee holds for MLPs, CNNs, and Transformers under standard regularity conditions. The experiments validate that this theoretical machinery connects to observable behavior in practice.

\paragraph{Scope of theoretical results.}
The consistency theorem is stated for binary classification, whereas the experiments use multiclass softmax classifiers. The extension is mostly notational: the uniform-convergence and envelope conditions of Lemma~\ref{lem:check_assumptions} continue to hold under the multinomial likelihood, since our assumptions (continuity and boundedness of $g_\theta$) are unaffected by the number of output classes. Second, the stationarity result (Theorem~\ref{thm:stationarity}) assumes the iterates remain in a bounded region. This holds in practice under weight decay or projected updates, but establishing it from first principles for Adam on these architectures is a substantive extension we leave to future work.

\paragraph{Relation to loss-landscape geometry.}
The equivalence class $\Theta_0$ is closely related to the connected minima manifolds studied in the loss-landscape literature \citep{choromanska2015loss,nguyen2018loss}. Those works characterize the geometry of the set of global minimizers of the training objective. Our contribution is complementary, as we show that empirical minimizers of the population loss concentrate on $\Theta_0$ regardless of its shape.

\paragraph{A note on layer normalization.}
The Transformer verification in Lemma~\ref{lem:check_assumptions} yields a stronger conclusion than the MLP and CNN cases, since each encoder block terminates in layer normalization, the envelope $G(x,y)$ is constant, and Assumption~\ref{assume:U-LLN3} holds with no moment condition on the input distribution. The recursive bounds needed for MLPs and CNNs have no analogue here, since the per-block output bound does not depend on the block's input. This suggests that beyond optimization benefit, layer normalization confers a statistical regularity that makes Transformers amenable to this style of analysis.

\paragraph{Limitations and Future Work.}
A few aspects of the experimental evaluation warrant caution. The adversarial evaluation is limited to FGSM, a single-step attack; stronger attacks such as PGD would provide a more rigorous assessment. On the theoretical side, the compactness assumption on $\Theta$ is standard in M-estimation theory but may not hold exactly for unconstrained gradient descent. Weight decay or projected gradient methods restore it in practice, but a formal treatment of the unbounded case via local compactification arguments would be valuable.

Our experiments confirm that S-divergence training is competitive with cross-entropy on clean data and degrades more gracefully under label noise, while also revealing the practical importance of tuning parameter selection and the limits of statistical robustness against adversarial attack. Among directions of future work, finite-sample convergence rates connecting the consistency result to explicit bounds on $d(\hat\theta_n, \Theta_0)$ as a function of $n$, $\dim\Theta$, and $(\beta,\lambda)$ would provide practical guidance on sample complexity. Combining S-divergence training with adversarial training to achieve simultaneous robustness to distributional contamination and worst-case input perturbation is a natural next step. Finally, relaxing the compactness assumption on $\Theta$ to accommodate unconstrained optimization, perhaps through implicit regularization arguments, would bring the theory closer to standard deep learning practice.

\section{Acknowledgements}
This research is supported by Indian Institute of Management Bangalore Research Seed Grant R\&P242-71.

\bibliographystyle{abbrvnat}
\bibliography{references}

\appendix

\section{Proof of Lemma~\ref{lem:check_assumptions}}
\label{app:proof-lemma1}

\begin{proof}
\noindent \textbf{i) Verification for Multilayer Perceptrons.}
It is sufficient to show that $g_\theta(x)$ is continuous in $\theta$ for
every $x$, and that $\sup_{\theta\in\Theta}|g_\theta(x)|$ admits an
integrable envelope. Since each linear map $\theta_i \mapsto \theta_i z$
and each activation $\phi_i$ is continuous, so is their composition; hence
$g_\theta(x)$ is continuous in $\theta$ for every $x$. This verifies
Assumption~\ref{assume:U-LLN2}.

To verify the uniform bound, note that the componentwise growth condition
yields $\|\phi_i(z)\| \le C_0\bigl(\sqrt{d_i} + \|z\|\bigr)
\le C_0(K_0 + \|z\|)$, where $K_0 := \max_i \sqrt{d_i}$. Writing the
output of the $i$th layer as $h_i^\theta(x) = \phi_i\bigl(\theta_i\,
h_{i-1}^\theta(x)\bigr)$ with $h_0^\theta(x) = x$, one then obtains
\begin{align*}
    \|h_i^\theta(x)\|
    &\le C_0\bigl(K_0 + \|\theta_i\|\,\|h_{i-1}^\theta(x)\|\bigr) \\
    &\le C_0 K_0 + C_0 M_0 \|h_{i-1}^\theta(x)\|,
\end{align*}
where $M_0 := \sup_{\theta \in \Theta}\max_i \|\theta_i\| < \infty$ owing
to the compactness of the set $\Theta$. Recursing over
$i = 1, \ldots, L$ gives
\[
    \|h_L^\theta(x)\| \;\le\; C_0 K_0 \sum_{k=0}^{L-1} (C_0 M_0)^k
    \;+\; (C_0 M_0)^L \|x\|,
\]
uniformly in $\theta \in \Theta$. Since the output satisfies
$g_\theta(x) = \theta_{L+1}\, h_L^\theta(x)$, it follows that
\[
    \sup_{\theta\in\Theta} |g_\theta(x)|
    \;\le\; M_0\bigl(K_0' + K_1 \|x\|\bigr) \;=:\; B(x)
\]
for the constants $K_0' := C_0K_0\sum_{k=0}^{L-1}(C_0M_0)^k$ and
$K_1 := (C_0M_0)^L$, which are finite and independent of $\theta$ and $x$.

Finally, for $y \in \{0,1\}$, using the bound
$0 \le \log(1+e^{a}) \le \log 2 + |a|$ for all $a \in \mathbb{R}$, we obtain
\begin{align*}
    |u(\theta, x, y)|
    &\le |y|\,|g_\theta(x)| + \log\!\bigl(1 + e^{g_\theta(x)}\bigr) \\
    &\le 2\,|g_\theta(x)| + \log 2 \\
    &\le 2B(x) + \log 2 \;=:\; G(x,y),
\end{align*}
and hence
\[
    \mathbb E_P\bigl[G(X, Y)\bigr]
    \;\le\; 2M_0 K_1\, \mathbb E_P\|X\| + 2M_0 K_0' + \log 2
    \;<\; \infty
\]
by the assumed first moment condition. This verifies Assumption~\ref{assume:U-LLN3},
and the conditions of Theorem~\ref{thm:consistency} hold.

\medskip

\noindent \textbf{ii) Verification for Convolutional Networks.}
The key observation is that the map
$\theta_i^{\mathrm{conv}} \mapsto T(\theta_i^{\mathrm{conv}})$ is linear. In particular, as a
linear map between finite-dimensional spaces it is continuous. Since each map $\theta_i^{\mathrm{conv}} \mapsto
T(\theta_i^{\mathrm{conv}})$ is continuous and each activation $\phi_i$
is continuous, the composition $\theta \mapsto g_\theta(x)$ is continuous
for every $x$, exactly as in case \textbf{(i)}. This verifies
Assumption~\ref{assume:U-LLN2}.

Since $\theta \mapsto
\|T(\theta_i^{\mathrm{conv}})\|$ is continuous for each $i$ and $\Theta$
is compact,
\[
    M_0^{\mathrm{conv}}
    \;:=\; \sup_{\theta\in\Theta}\, \max_i\,
        \|T(\theta_i^{\mathrm{conv}})\| \;<\; \infty.
\]
Writing $z_i^\theta(x) = \phi_i\bigl(
T(\theta_i^{\mathrm{conv}})\, z_{i-1}^\theta(x)\bigr)$ with
$z_0^\theta(x) = x$, the recursion of case \textbf{(i)} applies verbatim
with $M_0$ replaced by $M_0^{\mathrm{conv}}$, yielding
\[
    \sup_{\theta\in\Theta} |g_\theta(x)|
    \;\le\; M_0^{\mathrm{conv}}\bigl(K_0' + K_1\|x\|\bigr) \;=:\; B(x)
\]
for finite constants $K_0', K_1$ independent of $\theta$ and $x$. The
construction of the integrable envelope $G(x,y) = 2B(x) + \log 2$ and the
verification of Assumption~\ref{assume:U-LLN3} then proceed exactly as in case
\textbf{(i)}.

\medskip

\noindent \textbf{iii) Verification for Transformers.}
Denote by $Z_\ell \in \R^{N\times d}$ the output of the $\ell$th encoder
block, $\ell = 1,\ldots,L$, and for a matrix $Z \in \R^{N\times d}$ write
$\|Z\|_{\infty} := \max_{1\le n\le N}\|Z_{n,:}\|$ for the maximum row
norm. The network output is $g_\theta(x) = w^\top \mathrm{read}(Z_L)$,
where $w$ denotes the final classification head and $\mathrm{read}$ is
the (fixed, linear) map extracting the pooled representation from $Z_L$.

Every constituent map in
Eq.~\eqref{eqn:mha}--\eqref{eqn:ffn} is continuous in $\theta$ for fixed
$x$:
$Z_0\, W_Q^{(h)}\bigl(W_K^{(h)}\bigr)^\top Z_0^\top$ are polynomial in
the parameters, the softmax is smooth, the FFN activation $\phi$ is
continuous, and $\mathrm{LN}$ is smooth since its variance offset is
strictly positive. The embedding $E \mapsto Z_0$ and the classification
head are linear in their respective parameters, and the residual
connections are linear. Hence $\theta \mapsto g_\theta(x)$ is continuous
for every $x$ as a finite composition of continuous maps, verifying
Assumption~\ref{assume:U-LLN2}.

By compactness of $\Theta$,
\[
    M_0 := \sup_{\theta\in\Theta}\,\max_{h,\ell}\,
    \max\Bigl(\bigl\|W_V^{(h,\ell)}\bigr\|,
    \bigl\|W_O^{(h,\ell)}\bigr\|, \|w\|\Bigr) < \infty,
\]
and likewise the layer-normalization parameters are bounded uniformly
over $\Theta$. Two observations drive the bound. First, the softmax in
\eqref{eqn:mha} is applied row-wise, so each row of
$\mathrm{softmax}\bigl(Z_0 W_Q^{(h)}(W_K^{(h)})^\top Z_0^\top /
\sqrt{d_h}\bigr)$ lies in the probability simplex, and each row of the
$h$th summand is therefore a convex combination of the rows of
$Z_0\,W_V^{(h)} W_O^{(h)}$. Since a convex combination of vectors has
Euclidean norm at most the largest of their norms,
\begin{align*}
    \|\mathrm{Attn}(Z_0)\|_{\infty}
    \le\; \sum_{h=1}^{H}\,\max_{m}\
    \bigl\|\bigl(Z_0\,W_V^{(h)} W_O^{(h)}\bigr)_{m,:}\bigr\|
    \le\; H M_0^2\,\|Z_0\|_{\infty}.
\end{align*}
In particular, the query and key matrices shape only the mixing weights
and play no role in the magnitude bound.
Second, $\mathrm{LN}$ produces uniformly bounded outputs: for any
$v \in \R^d$, the vector
$u := (v - \bar v\mathbf{1})\big/\bigl(\tfrac{1}{d}\|v - \bar v
\mathbf{1}\|^2 + \epsilon\bigr)^{1/2}$ satisfies
$\|u\|^2 = \|v - \bar v\mathbf{1}\|^2 \big/ \bigl(\tfrac{1}{d}\|v -
\bar v\mathbf{1}\|^2 + \epsilon\bigr) \le d$, and hence
\[
    \|\mathrm{LN}(v)\| \;\le\; \Gamma \;<\; \infty
    \qquad \text{uniformly in } v \in \R^d \text{ and }
    \theta \in \Theta,
\]
where $\Gamma$ depends only on $d$ and the (compactly constrained)
layer-normalization parameters. Since both \eqref{eqn:res1} and \eqref{eqn:ffn}
terminate in $\mathrm{LN}$, it follows that
$\|Z_\ell\|_{\infty} \le \Gamma$ for every $\ell \ge 1$, uniformly in
$\theta \in \Theta$ and in the input: in contrast to cases \textbf{(i)}
and \textbf{(ii)}, no recursion accumulates, as each block's output
bound is independent of its input bound. Since $\mathrm{read}$ returns a
convex combination (or a coordinate projection) of the rows of $Z_L$,
\[
    \sup_{\theta\in\Theta}\, |g_\theta(x)|
    \;\le\; \sup_{\theta\in\Theta}\|w\| \cdot \|Z_L\|_{\infty}
    \;\le\; M_0\,\Gamma \;=:\; B,
\]
a finite constant independent of both $\theta$ and $x$. The envelope
$G(x,y) := 2B + \log 2$ constructed as in case \textbf{(i)} is therefore
itself constant, so that $\mathbb E_P[G(X,Y)] < \infty$ holds with no moment
condition on $X$ whatsoever. This verifies Assumption~\ref{assume:U-LLN3}, and the
conditions of Theorem~\ref{thm:consistency} hold.
\end{proof}

\end{document}